\DeclareMathOperator*{\argmax}{arg\,max}
\DeclareMathOperator*{\argmin}{arg\,min}
\theoremstyle{plain}
\newtheorem{theorem}{Theorem}[section]
\newtheorem{example}{Example}
\theoremstyle{definition}
\newtheorem{assumption}[theorem]{Assumption}
\theoremstyle{remark}
\newtheorem{remark}[theorem]{Remark}
\title{Personalized Language Modeling from \\Personalized Human Feedback
}
\author[*,1]{Xinyu Li}
\author[3]{Ruiyang Zhou}
\author[1]{Zachary C. Lipton}
\author[ \;,2,3]{Liu Leqi\thanks{Equal contribution. Corresponding authors: \texttt{xinyul2@andrew.cmu.edu}, \texttt{leqiliu@utexas.edu}}}
\affil[1]{Carnegie Mellon University}
\affil[2]{Princeton Language and Intelligence}
\affil[3]{University of Texas at Austin}
\begin{document}
\maketitle

\begin{abstract}

Personalized large language models (LLMs)  are designed to tailor responses to individual user preferences. While Reinforcement Learning from Human Feedback (RLHF) is a commonly used framework for aligning LLMs with human preferences, vanilla RLHF assumes that all human preferences share the same distribution, preventing fine-tuned LLMs from generating personalized content when user preferences are diverse. In this work, we propose Personalized-RLHF (P-RLHF), an efficient framework that utilizes a lightweight user model to capture individual user preferences and jointly learns the user model and the personalized LLM from human feedback. 
P-RLHF exhibits the following three characteristics:
(1) It enables an LLM to generate personalized content and scale efficiently with growing number of users.
(2) It handles both explicit user preferences described as textual input and implicit user preferences encoded in the feedback data. 
(3) It eliminates the need for users to fully articulate their preferences, which are normally needed for prompting LLMs to generate personalized content yet are often impractical to obtain in real-world scenarios. 
Our experimental results show that personalized LLMs trained using P-RLHF generate responses that are more closely aligned with individual user preferences, outperforming vanilla, non-personalized RLHF and prompting-based personalization approaches across different tasks. We opensource our code at \url{https://github.com/HumainLab/Personalized_RLHF}.

\end{abstract}

\keywords{Reinforcement Learning from Human Feedback \and Personalization \and Large Language Models}

\section{Introduction}
\label{sec:intro}

Personalization aims to generate tailored responses or recommendations to meet the unique preferences of individual users, based on user information (e.g. demographic or interests) or their historical data~\citep{chen2023survey}. It enhances user experience and engagement, making it crucial in a wide range of domains including recommendation systems~\citep{li2023text}, chatbots~\citep{ma2021one}, healthcare~\citep{kadariya2019kbot}, and education~\citep{maghsudi2021personalized}. Large language models (LLMs)~\citep{brown2020language, chowdhery2022palm, dubey2024llama} have demonstrated exceptional capabilities in text generation, reasoning, and instruction following, leading to their use in various real-world user-facing applications. Thus, personalizing LLMs to align with individual user preferences has become a key research topic~\citep{li2023teach}.

Reinforcement Learning from Human Feedback (RLHF) is a widely adopted framework to align pre-trained LLMs with human preferences~\citep{ziegler2019fine}, by fine-tuning LLMs using human feedback data in the form of preference comparisons or rankings over multiple generations. However, standard RLHF approaches \emph{implicitly} assume that all human preferences come from the same distribution~\citep{ziegler2019fine, stiennon2020learning, ouyang2022training, rafailov2023direct}, limiting the ability of LLMs fine-tuned under such assumption to generate personalized responses when user preferences encoded in human feedback are diverse or conflicting~\citep{kirk2023personalisation}. Recent endeavors in developing RLHF-based~\citep{wu2023fine, jang2023personalized} methods for personalizing LLM outputs often require training separate reward models or LLMs for each preference dimension (such as completeness, friendliness etc.), 
posing computational and storage challenges, particularly in settings with large user bases that exhibit diverse and multifaceted preferences. 
Additionally, these methods rely on predefined preference dimensions, limiting their flexibility, as it is often impractical to exhaustively enumerate all user preference dimensions in real-world scenarios.

\begin{figure}[htb]
  \includegraphics[width=0.9\textwidth]{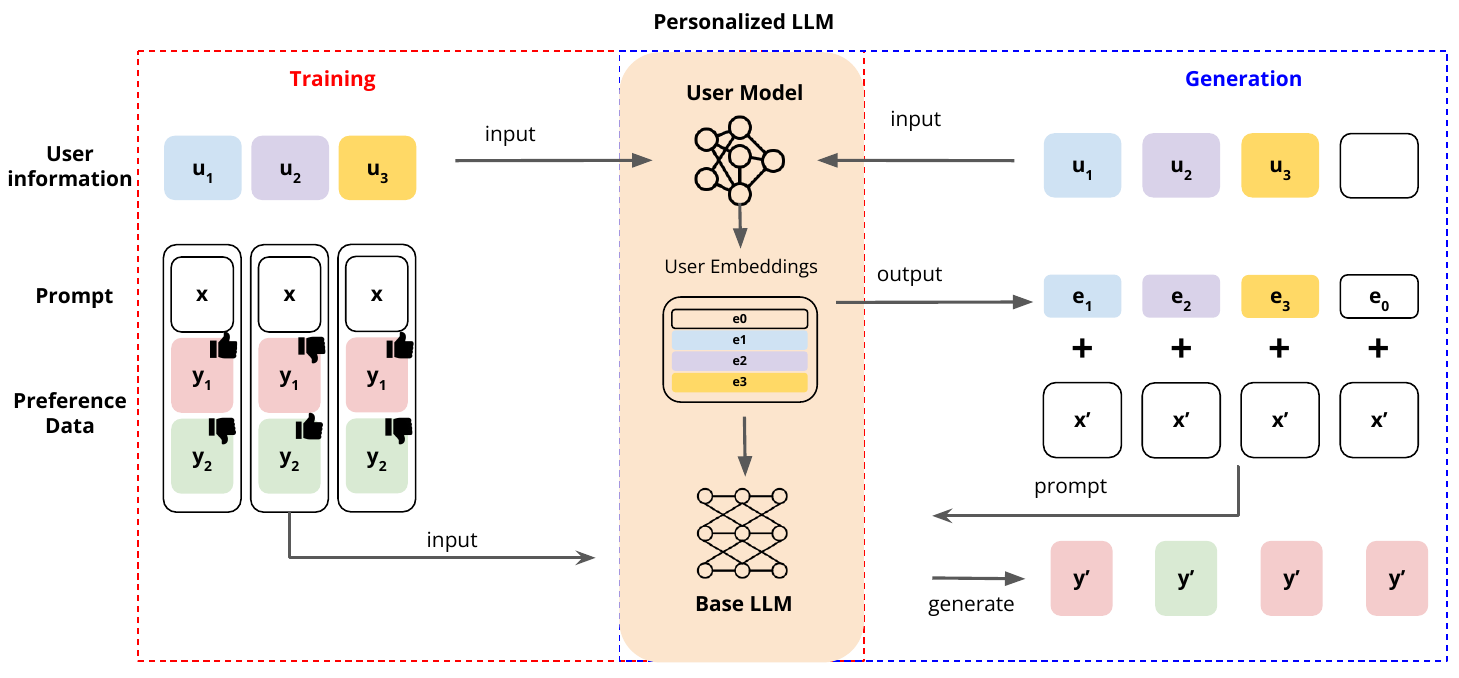}
  \caption{Our \textbf{Personalized RLHF} framework. A personalized LLM (highlighted in orange) consists of two key components: a \textbf{learnable user model} and a \textbf{base LLM} (introduced in Section 4.2). For \textcolor{red}{training}, the user information $u_i$ and the preference data are collected from each user (in this example there are $3$ users $i = 1, 2, 3$). The user model maps the user information into user embeddings (user-specific embeddings $e_i$ and the generic embedding $e_0$ that captures the common preferences shared across users), which are learned jointly with the base LLM using a new P-RLHF learning objective (derived in Section 4.4). During \textcolor{blue}{generation}, for seen users, the responses tailored to their individual preferences are generated based on the learned user embeddings ($e_i$), while for new users unseen during training, responses are generated using the generic embedding ($e_0$).}
  \label{fig:framework}
\end{figure}

To build \emph{efficient} and \emph{flexible} personalized LLMs, we introduce the setting for Learning from Personalized Human Feedback (Section 4), which leverages both user information in textual form and historical feedback data in preference form. 
We begin with formalizing the deficiency of vanilla RLHF (Section~\ref{sec:vanilla-rlhf}) in personalization,
then move to proposing a general \emph{personalized RLHF (P-RLHF)} framework, as shown in Figure~\ref{fig:framework}. 
Our proposed framework employs a \emph{lightweight} user model to capture both \emph{explicit} preferences from user information and \emph{implicit} preferences from feedback data. 
This is particularly beneficial when it is difficult to fully describe user preferences using pre-defined dimensions or text, as our design allows missing information to be inferred flexibly from feedback data which enables a more comprehensive understanding of user preferences.

To instantiate our framework, we discuss how different assumptions on user preferences can influence the design of the user model (Section~\ref{sec:user-model}).
P-RLHF learns the user model and the LLM jointly through new learning objectives we develop for performing personalized Direct Preference Optimization (P-DPO, section~\ref{sec:p-dpo}). 
By incorporating a user model, P-RLHF eliminates the need for training separate reward models or LLMs, enabling efficient and scalable personalization across large number of users.
On three tasks using publicly available preference datasets---synthetic generation with conflicting preferences, synthetic instruction following with diverse user profiles, and a real-world conversation task with $1,500$ users---we demonstrate that P-DPO effectively aligns LLM behavior with individual user preferences and scales efficiently with large user bases (Section \ref{sec:experiments}).

\vspace{-0.5em}
\section{Related Work}
\label{sec:related-work}

\textbf{Reinforcement Learning from Human Feedback} RLHF optimizes LLMs as RL policies to generate responses aligned with human preferences~\citep{stiennon2020learning, ouyang2022training, bai2022training}. RLHF training involves either learning a reward model from the preference data and then optimizing the LLM against the learned reward model using proximal policy optimization, or directly optimizing the LLM using the preference data through methods like Direct Preference Optimization (DPO)~\citep{rafailov2023direct}, with the latter offering significant improvement in training efficiency. Vanilla RLHF methods implicitly assume user preferences uniformity, overlooking inter-user diversity and consequently limiting fine-tuned LLMs' ability to generate personalized content tailored to individual user preferences, especially when the often impractical explicit specification of user preferences are not provided to the model.

To introduce personalization in RLHF, recent studies have proposed learning separate reward models or LLM policies for different preference dimensions, then personalizing LLM outputs by customizing reward weights~\citep{wu2023fine} or merging LLMs based on specific preference choices~\citep{jang2023personalized}.
Our work differs from these previous studies in two key ways: 
(1) our personalized LLMs are directly learned from user information and personalized feedback data, without relying on pre-defined preference dimensions; and
(2) we do not require multiple LLMs or reward models, instead using only a small user model to augment the base LLM.
Concurrently, a different research direction to address the diversity in user preferences focuses on learning LLM policies that perform robustly across different user groups, using methods such as group invariant learning~\citep{zheng2023improving} or distributionally robust optimization~\citep{chakraborty2024maxmin}. 
Unlike our approach, which generates personalized content tailored to individual user preferences, these methods do not personalize the LLM but instead focus on enabling it to generate content that minimizes performance discrepancies between user groups from a fairness perspective.

\textbf{Prompt-based LLM Personalization} In addition to RLHF-based approaches, prompt-based LLM personalization focuses on developing prompting techniques that enable LLMs to capture individual user preferences and tailor their outputs accordingly. This typically involves incorporating historical user-generated content as few-shot examples in the prompt, allowing LLMs to generate personalized content through in-context learning~\citep{dai2023uncovering, kang2023llms}. Recent studies have further improved this approach by combining retrieval techniques to construct prompts with relevant user data~\citep{salemi2023lamp, salemi2024optimization, yang2023palr, li2023gpt4rec} and augmenting prompts with user information summaries~\citep{richardson2023integrating}. 
Our work complements prompt-based LLM personalization.
While prompt-based methods utilize user-generated content, such as user-written text or selected items, we focus on personalizing LLMs using preference data in the form of comparisons or rankings, a common form of feedback collected from end-users that supplements user-generated content and captures implicit user preference. As a result, prompt-based benchmarks such as LaMP~\citep{salemi2023lamp} are not directly applicable to our method.

Due to space constraints, additional related work including crowdsourcing and conditional natural language generation are discussed in Appendix~\ref{sec:additional_related_work}.

\section{Vanilla RLHF}\label{sec:vanilla-rlhf}
We briefly go over the vanilla RLHF pipeline including DPO 
and reflect on their deficiency in personalization. 
In vanilla RLHF, there are three steps~\citep{ziegler2019fine,  ouyang2022training}: 
(1) obtain a supervised fine-tuned (SFT) policy (denoted as $\pi^\text{SFT}$) using a demonstration dataset; 
(2) learn a Reward Model (RM) using a {preference} dataset;
and (3) optimize the LLM against the learned reward model using policy optimization methods, e.g., proximal policy optimization (PPO)~\cite{schulman2017proximal}. 
Uncovering a reparametrization of the optimal LM under the learned RM and the RL objective, 
DPO directly optimizes the LLM using a preference dataset \citep{rafailov2023direct}.

\textbf{Vanilla RLHF via Reward Modeling}
The vanilla reward learner has access to a \emph{preference} dataset $\mathcal{D} = \{( x_i, y_{i,1}, y_{i,2})\}_{i=1}^n$.  
In each sample, $x_i$ is the prompt, 
$y_{i,1}$ and $y_{i,2}$ are two generated texts such that $y_{i,1}$ is preferred over $y_{i,2}$ (i.e., $y_{i,1} \succ y_{i,2}$) under the prompt $x_i$.
A reward model that maps a tuple $(x, y)$ of prompt $x$ and generated text $y$ to a scalar is learned through:
\begin{align}\label{eq:vanilla-reward-obj}
    r_\text{vanilla} \in \argmin_{r} -\mathbb{E}_{x, y_{1}, y_{2} \sim \mathcal{D}}[ \log \sigma(r(x, y_{1}) - r(x, y_{2}))],
\end{align} 
where $\sigma$ is the sigmoid function and the minimization is over all measurable functions. 
As noted in~\citet{zhu2023principled, rafailov2023direct}, the underlying assumption for using~\eqref{eq:vanilla-reward-obj} to learn the reward model $r_\text{vanilla}$ is that the user preferences follow
the Bradley-Terry (BT) model \citep{bradley1952rank}.
In other words, the vanilla RM $r_\text{vanilla}$ is the maximum likelihood estimator on the dataset $\mathcal{D}$ under the assumption: for all prompt $x$ and generated texts $y_1, y_2$, user preferences follow 
\begin{align}\label{eq:bt_model}
    \mathbb{P}(y_1 \succ y_2 | x) =& \frac{ \exp \left( r(x, y_1) \right) }{ \exp \left( r(x, y_1) \right) + \exp \left( r(x, y_2) \right)} 
    = \sigma(r(x, y_{1}) - r(x, y_{2})).
\end{align}

Once $r_\text{vanilla}$ is learned,
{the LLM policy $\pi_\text{vanilla}$ is learned by maximizing the rewards under a KL-divergence penalty which controls the deviance between the learned LLM and the SFT $\pi^{\text{SFT}}$:}
\begin{align}\label{eq:ppo-obj}
    \pi_\text{vanilla} &\in \argmax_{\pi} \mathbb{E}_{x \sim \mathcal{D}, y \sim \pi(\cdot | x)}[r_\text{vanilla}(x, y)] 
     - \beta \mathbb{E}_{x \sim \mathcal{D}} [\text{KL}(\pi(\cdot | x), \pi^\text{SFT}(\cdot | x))],
\end{align}
where $\text{KL}$ is short-handed for the Kullback–Leibler divergence and $\beta>0$ is a tunable parameter controlling the strength of the penalty.

\textbf{Vanilla DPO}
DPO is an alternative to RM-based RLHF approaches. As noted in~\citet{rafailov2023direct}, 
given any RM $r$, its corresponding optimal policy under  (\eqref{eq:ppo-obj}) can be written as 
\begin{align}\label{eq:opt-ppo-policy}
    \pi(y|x) = \frac{1}{Z(x)} \pi^\text{SFT}(y|x) \text{exp}\left(\frac{r(x,y)}{\beta}\right),
\end{align}
where $Z(x)$ is a generated-text-independent (or $y$-independent) normalizing factor. Plugging \eqref{eq:opt-ppo-policy} into the reward objective (\eqref{eq:vanilla-reward-obj}), we obtain the following way of obtaining $\pi_\text{vanilla}$: 
\begin{align}\label{eq:vanilla-dpo}
    &\pi_\text{vanilla} \in \argmin_{\pi} 
    -\mathbb{E}_{x, y_{1}, y_{2} \sim \mathcal{D}} \bigg[  
    \log \sigma\bigg(
    \beta \log \frac{\pi(y_1|x)}{\pi^\text{SFT}(y_1|x)} 
    - \beta \log \frac{\pi(y_2|x)}{\pi^\text{SFT}(y_2|x)} \bigg)\bigg],
\end{align}
where $\mathcal{D}$ is the preference data given in \eqref{eq:vanilla-reward-obj}. 
Under this reparametrization, the corresponding vanilla RM $r_\text{vanilla}$ can be written as
$
    r_\text{vanilla}(x,y) = \beta \log \frac{\pi_\text{vanilla}(y|x)}{\pi^\text{SFT}(y|x)} + \beta \log Z(x).
$
In the following, we reflect on the underlying assumption about user preferences 
{in vanilla RLHF} and highlight the limitations of LLMs fine-tuned under such assumption for personalized content generation.

\subsection{
Motivation for personalized RLHF: 
{
Undesirable Assumption on User Preferences in Vanilla RLHF
}
}
\label{subsec:failure-modes}

{We} study the behavior and underlying assumption of $r_\text{vanilla}$ that is either learned explicitly through the reward modeling step (\eqref{eq:vanilla-reward-obj}) or implicitly through  DPO  (\eqref{eq:vanilla-dpo}).  
We show that the corresponding assumption is particularly problematic when users have diverse or conflicting preferences. 
The proofs for this section are in Appendix \ref{appendix:proof}.

As in~\citet{ziegler2019fine}, often times, the reward learner has access to identifier information $u \in \mathcal{U}$ of the user who provides their preferences (and annotations), in addition to the prompt and generated texts $(x, y_{1}, y_{2})$. 
In vanilla RLHF, while we make the explicit assumption that user preferences follow a BT model (\eqref{eq:bt_model}), 
we often ignore the implicit assumption we make on \emph{preference uniformity}: 

\begin{assumption}[Preference Uniformity]\label{assump:preference-uniformity}
In vanilla reward modeling and DPO, the user preferences are assumed to be uniform, i.e., for all $u \in \mathcal{U}$,
\begin{align}
    \mathbb{P}(y_1 \succ y_2 | x, u) = \mathbb{P}(y_1 \succ y_2 | x).
\end{align}
\end{assumption}

This assumption may be reasonable when our goal is to uncover certain preferences that are common across different users, concerning topics like factuality and safety.  
In settings where user preferences are diverse (e.g., on styles of generated texts),
this assumption may be undesirable.  
We showcase this by first analyzing how $r_\text{vanilla}$ behaves on the training dataset, 
and then discussing general problems with the Preference Uniformity Assumption~\ref{assump:preference-uniformity}. 

\begin{restatable}{lemma}{lemmaMV}[$r_\text{vanilla}$ is equivalent to majority voting]\label{lemma:majorty-voting}
For all $i \in [n]$, the estimated user preference under $r_\text{vanilla}$ is given by
\begin{align*}
    \mathbb{P}(y_{i,1} \succ y_{i,2}|x_i) =& \sigma(r_\text{vanilla}(x_i, y_{i,1}) - r_\text{vanilla}(x_i, y_{i,2})) 
    = \frac{\sum_{j \in [\mathcal{C}_i]} \mathbb{I}\{y_{j,1} = y_{i,1}\}}{|\mathcal{C}_i|},
\end{align*}
where $\mathcal{C}_i =\{j \in [n] |x_j = x_i, y_{j,1} = y_{i,1},  y_{j,2} = y_{i,2} \} \cup \{j \in [n] |x_j = x_i, y_{j,1} = y_{i,2},  y_{j,2} = y_{i,1} \}$ 
is the set of sample indices that share the same prompt and response pairs as $x_i$. 
\end{restatable}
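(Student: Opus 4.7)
The plan is to exploit that the minimization in \eqref{eq:vanilla-reward-obj} is over all measurable functions, so the objective decouples across equivalence classes of samples that share the same prompt and the same unordered response pair. First I would partition $\mathcal{D}$ into such classes; by construction, $\mathcal{C}_i$ is exactly one such class (indexed arbitrarily by any of its members). Since the per-sample loss depends on $r$ only through the scalar difference $\Delta_i := r(x_i, y_{i,1}) - r(x_i, y_{i,2})$, and since distinct classes involve at least one distinct evaluation point $(x, y)$ at which $r$ can be chosen independently, minimizing the overall objective reduces to minimizing the contribution of each class separately.

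Within class $\mathcal{C}_i$, let $n_+ := \sum_{j \in \mathcal{C}_i} \mathbb{I}\{y_{j,1} = y_{i,1}\}$ and $n_- := |\mathcal{C}_i| - n_+$. A sample $j \in \mathcal{C}_i$ with $y_{j,1} = y_{i,1}$ contributes $-\log \sigma(\Delta_i)$ to the objective, while one with $y_{j,1} = y_{i,2}$ (so $y_{i,1}$ appears as the dispreferred response) contributes $-\log \sigma(-\Delta_i)$. The class-level objective is therefore $-n_+ \log \sigma(\Delta_i) - n_- \log \sigma(-\Delta_i)$, a strictly convex scalar function of $\Delta_i$ whenever both $n_+, n_- > 0$. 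Setting its derivative to zero and using $\sigma'(z) = \sigma(z)(1-\sigma(z))$ together with $1 - \sigma(\Delta_i) = \sigma(-\Delta_i)$ yields the first-order condition $-n_+ \sigma(-\Delta_i^\ast) + n_- \sigma(\Delta_i^\ast) = 0$, hence $\sigma(\Delta_i^\ast) = n_+/(n_+ + n_-) = n_+/|\mathcal{C}_i|$. Substituting into the Bradley--Terry expression \eqref{eq:bt_model} recovers exactly the claimed empirical-frequency formula.

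The main subtlety, rather than a genuine obstacle, is justifying the pointwise decomposition: I must verify that perturbing $r$ on the two points $\{(x_i, y_{i,1}), (x_i, y_{i,2})\}$ does not affect the contribution of any other class, which holds because two classes differ in either prompt or in at least one response, so the unordered sets of evaluation points associated to different classes are disjoint (or if they overlap, only through a $y$ that appears in both members of the same class, which is the case already handled). I would also briefly address the degenerate cases $n_+ \in \{0, |\mathcal{C}_i|\}$: the infimum is then attained only in the limit $\Delta_i^\ast \to \pm\infty$, but $\sigma(\Delta_i^\ast)$ still converges to $0$ or $1$ in agreement with the empirical frequency, so the identity holds along any minimizing sequence, which suffices since the lemma's statement is about the probability implied by the learned reward rather than the reward values themselves.
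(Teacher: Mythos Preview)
Your proposal is correct and follows essentially the same route as the paper: both reduce to the first-order condition in the scalar reward difference $\Delta_i = s_i$ for each equivalence class $\mathcal{C}_i$, yielding $\sigma(\Delta_i^\ast) = n_+/|\mathcal{C}_i|$. Your write-up is in fact more careful than the paper's (you justify the per-class decoupling and handle the degenerate $n_+\in\{0,|\mathcal{C}_i|\}$ cases explicitly), but the key idea is identical.
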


The above lemma, though straightforward, 
showcases one of the fundamental problems with $r_\text{vanilla}$.
That is, it induces a majority voting regime where responses preferred by the majority are assumed to be preferred by all users. 
In the personalization setting where diversity in preferences matters, 
such a majority-voting scheme may silence the preferences of the minority communities. 
In the worst case where the preferences of the majority and minority groups conflict, the LLM's generations may be entirely misaligned with what the minority users prefer.

Reflecting more on the Preference Uniformity Assumption (\ref{assump:preference-uniformity}),
we find that under this assumption, when there is a minority and a majority group that differ in their preferences, 
the minority group will necessarily suffer more in the sense that their true preference $\mathbb{P}(y_1 \succ y_2 | x, u_\text{minority})$ deviates from the assumed uniform preference $\mathbb{P}(y_1 \succ y_2 | x)$ more than that for $\mathbb{P}(y_1 \succ y_2 | x, u_\text{majority})$. 
In addition, this deviance increases as the size of the majority group increases.

\begin{restatable}{lemma}{lemmaMajorityMinority}\label{lemma:majorty-minority}
    When $\mathbb{P}(u_\text{majority}) \geq \mathbb{P}(u_\text{minority})$, we have that 
    $|\mathbb{P}(y_1 \succ y_2 | x) - \mathbb{P}(y_1 \succ y_2 | x, u_\text{minority})| > |\mathbb{P}(y_1 \succ y_2 | x) - \mathbb{P}(y_1 \succ y_2 | x, u_\text{majority})|$. 
    In addition, as the majority group size increases, the minority group deviates from the assumed uniform preference more, i.e., $|\mathbb{P}(y_1 \succ y_2 | x) - \mathbb{P}(y_1 \succ y_2 | x, u_\text{minority})|$  is monotonically increasing with respect to $\mathbb{P}(u_\text{majority})$. 
\end{restatable}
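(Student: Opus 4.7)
The plan is to reduce both claims to a single identity obtained from the law of total probability. Writing $\alpha := \mathbb{P}(u_\text{majority})$, $1-\alpha := \mathbb{P}(u_\text{minority})$, $p_M := \mathbb{P}(y_1 \succ y_2 \mid x, u_\text{majority})$, and $p_m := \mathbb{P}(y_1 \succ y_2 \mid x, u_\text{minority})$, marginalizing over the user group gives
\[
\mathbb{P}(y_1 \succ y_2 \mid x) \;=\; \alpha\, p_M + (1-\alpha)\, p_m.
\]
From here the two deviations telescope cleanly:
\[
\mathbb{P}(y_1 \succ y_2 \mid x) - p_m \;=\; \alpha\,(p_M - p_m),
\qquad
\mathbb{P}(y_1 \succ y_2 \mid x) - p_M \;=\; -(1-\alpha)\,(p_M - p_m).
\]

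Taking absolute values, the comparison claim reduces to $\alpha\,|p_M - p_m| > (1-\alpha)\,|p_M - p_m|$, which follows directly from $\alpha \ge 1-\alpha$ (that is, $\alpha \ge 1/2$, implied by $\mathbb{P}(u_\text{majority}) \ge \mathbb{P}(u_\text{minority})$) together with the implicit nontriviality condition $p_M \ne p_m$ (otherwise both sides are zero and there is nothing to compare). For the monotonicity claim, I would simply observe that the expression $\alpha\,|p_M - p_m|$ is linear in $\alpha$ with positive slope $|p_M - p_m| > 0$, hence strictly increasing in $\mathbb{P}(u_\text{majority})$.

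There is no real technical obstacle here; the proof is essentially a two-line calculation. The only subtlety worth flagging is that the statement implicitly assumes (i) a two-group partition of $\mathcal{U}$ into majority and minority, so that $\mathbb{P}(u_\text{minority}) = 1 - \mathbb{P}(u_\text{majority})$, and (ii) that the two groups disagree on the pair $(y_1, y_2)$, i.e.\ $p_M \ne p_m$, and (iii) at least one of $\alpha > 1/2$ or $p_M \ne p_m$ holds strictly to give the strict inequality. I would state these conditions explicitly at the start of the proof so that the chain of (in)equalities above holds strictly as claimed, and then conclude both parts from the boxed identity.
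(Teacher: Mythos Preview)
Your proposal is correct and follows essentially the same approach as the paper: both use the law of total probability to write the marginal preference as a convex combination of the two group-conditional preferences, then read off the deviations as $\alpha|p_M-p_m|$ and $(1-\alpha)|p_M-p_m|$ and compare. Your write-up is arguably more complete, since you explicitly handle the monotonicity claim and flag the implicit two-group and $p_M\neq p_m$ assumptions that the paper leaves tacit.
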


Lemma~\ref{lemma:majorty-voting} and~\ref{lemma:majorty-minority} showcase that $r_\text{vanilla}$, obtained under vanilla reward modeling (\eqref{eq:vanilla-reward-obj}) or vanilla DPO (\eqref{eq:vanilla-dpo}), may be unsuitable when user preferences are diverse. 
In the following, we propose methods for %
Personalized RLHF to capture individual user preferences which enables LLMs learned under such framework to generate personalized content tailored to each user (Section \ref{sec:p-rlhf}). 
Below we first formally define the task of learning from personalized feedback.

\section{Learning from Personalized Human Feedback}

\subsection{Personalized LLM: Problem setup} \label{sec:p-rlhf-setup}

We first formally define the learning setup when given a \emph{personalized preference} dataset. 
A personalized human feedback (or preference) dataset $\mathcal{D}_\text{p} = \{( x_i, y_{i,1}, y_{i,2}, u_i)\}_{i=1}^n$ consists of $n$ samples where $u_i \in \mathcal{U}$ is the information of the user who annotates the data or provides the preferences, $x_i$ is the prompt, 
$y_{i,1}$ and $y_{i,2}$ are two generated texts such that $y_{i,1} \succ y_{i,2}$ under the user's preference. 
We consider cases where $u_i = (u_i^t, u_i^p)$ is the user information:
$u_i^t$ is their (optional) textual information, e.g., demographic data or user preference descriptions, 
and $u_i^p$ is the unique user identifier (e.g., an assigned annotator or user id).
For new, unknown user, their identifier is set to $u_i^p = u_0^p$
and their user textual information $u_i^t$ is optional.

A personalized LLM $\pi_\text{p}$ takes in a prompt $x$ and the user information $u \in \mathcal{U}$ and customizes its text generation based on user $u$'s personal preference (explicitly specified in $u_i^t$ or implicitly encoded in their feedback data), i.e., $y \sim \pi_\text{p}(\cdot | x, u)$. 
When there is no textual information, i.e., $u^t = ()$, and the user index is unknown, i.e., $u^p = u_0^p$, the LLM $\pi_\text{p}$ generates a  non-personalized response.
In the following, we present a general framework to obtain the personalized LLM $\pi_\text{p}$.

{
}

\subsection{P-RLHF General Framework}\label{sec:p-rlhf}

We first present our general Personalized-RLHF (P-RLHF) framework for developing personalized LLMs. 
{
When building personalized LLMs, we start with a base LLM, often times, $\pi^\text{SFT}$, and specify: 
}
\begin{itemize}[leftmargin=*, nosep]
    \item a learnable \textbf{User Model} $f_\text{P}$ that extracts a user embedding (tensor) $e_u$ from the user information $u = (u^t, u^p)$. In other words, for all $u \in \mathcal{U}$, a user embedding is given by $e_u = f_\text{P}(u)$.
\end{itemize}
{
Thus, the personalized LLM $\pi_{\text{P}}$ consists of the user model $f_{\text{P}}$ and a base LLM, as illustrated in Figure~\ref{fig:framework}. Below we first provide some examples of user models.
We will then present new objectives (e.g., P-DPO) for learning the user model and the personalized LLM.
}

\subsection{P-RLHF User Models}\label{sec:user-model} 

\begin{wrapfigure}{r}{0.5\textwidth}
    \vspace{-1em}
    \centering
    \includegraphics[width=0.5\textwidth]{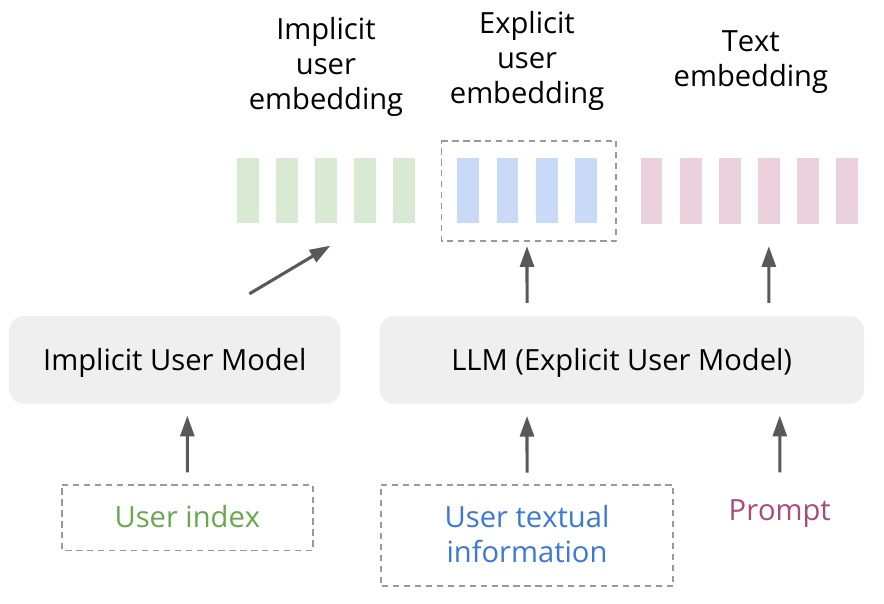}
    \caption{How implicit and explicit user embeddings are obtained and combined with text embedding. Dashed boxes indicate \emph{optional} components. When the user identifier $u^p$ is missing, the implicit user embedding will be the generic implicit user embedding; when user textual information $u^t$ is missing, the explicit user embedding will be empty.}
    \label{fig:implicit_explicit_user_embedding}
    \vspace{-1em}
\end{wrapfigure}

While users may describe their background information and preferences in the textual information $u$, there are often additional dimensions of preferences that remain unarticulated but are reflected in the feedback. To ensure a comprehensive understanding of user preferences, P-RLHF captures both the \emph{explicit} preferences described in the textual information $u^t$ 
and the \emph{implicit} preferences encoded in the feedback data, and then combine them for personalized content generation.
The user model $f_\text{P}$ is thus designed to include two components: an explicit user model $f^{ex}_\text{P}$ and an implicit user model $f^{im}_\text{P}$, to address both aspects.

The explicit user model $f^{ex}_\text{P}$ takes in textual information $u^t$ and outputs the explicit user embedding $e^{\text{ex}}$ for user $u$. Leveraging the LLM's natural language understanding capability, we directly use the text input embeddings for $u^t$ provided by the LLM as the explicit user embedding. Specifically, $e^{\text{ex}}_{u} \in \mathbb{R}^{T_{\text{text}}\times d} $, where $T_{\text{text}}$ is the number of tokens in $u^t$ and $d$ is the token-wise embedding dimensionality of the LLM. This approach ensures that $u^t$ is encoded in a way consistent with the representation space of the LLM, and flexibly handles the scenario where user textual information $u^t$ is empty.

The implicit user model $f^{\text{im}}_\text{P}$ captures the additional user preferences that are not articulated in $u^t$ but are latent in the feedback data. 
To facilitate a more efficient learning of these implicit preferences, we structure $f^{\text{im}}_\text{P}$ to encode specific \emph{preference assumptions} regarding how different users' preferences are related to each other.
In the following, we illustrate how $f^{\text{im}}_\text{P}$ can be defined. 
The implicit user preferences are learned without relying on the textual user information. 
It directly maps the unique user identifier $u^p$ to its embedding $e^\text{im} \in \mathbb{R}^{T_u \times d}$, where $T_{u}$ is the user token length, a factor that controls the expressivity of implicit user embeddings.
For simplicity, we consider such identifiers as indices:
For known users, $u_i^p \in \{1, \ldots, m\}$, where $m$ represents the total number of users. For any new, unknown user (encountered only during inference time), we assign them index $u_0^p = 0$.
Below we provide some examples on the implicit user model $f_\text{P}^{\text{im}}$. 

\begin{example}[Uniform Preference]\label{ex:constant-user-model}
Let $\mathcal{I} = \{0\} \cup [m]$ be the set of indices for users in $\mathcal{U}$. For $i \in \mathcal{I}$, the implicit user model $f^{\text{im}}_\text{P} (i) = e^{\text{im}}$ outputs the same embedding.
\end{example}
We note that this embedding $e^{\text{im}}$ can be an empty tensor. 
This user model assumes that all users share the same embedding, 
{which is the underlying assumption of vanilla RLHF.}

\begin{example}[Individualized Preference]\label{exp:ind}
The implicit user model outputs $f^{\text{im}}_\text{P}(0) = e^{\text{im}}_0$ for (unknown) users indexed by $0$.  
For all $i \in [m]$,
the user model outputs $f^{\text{im}}_\text{P}(i) = e^{\text{im}}_i = e^{\text{im}}_0 + o_i$ 
where $o_i$ is a user-specific offset tensor. 
\end{example}
This user model assumes that a user with index $i$ has their individualized preference offset $o_i$ while maintaining a component $e^{\text{im}}_0$ shared across users, as shown in Figure~\ref{fig:ind}. 
The common tensor $e^{\text{im}}_0$ can be understood as the commonality across user preferences concerning topics like factuality and safety. 
When the common user embedding $e^{\text{im}}_0$ and the individual offsets $o_i$ are vectors, 
one can implement this user model as an embedding table.

\begin{example}[Cluster-based Preference]\label{exp:cluster}
For all $i \in \mathcal{I}$, 
the user model outputs $f^{\text{im}}_\text{P}(i) = e^{\text{im}}_i = V \cdot w_i $ where 
$V$ is an embedding table including $K$ cluster centers, with $K$ being the number of clusters, and
$w_i \in \mathbb{R}^K$ is a weight vector for each user.  
\end{example}

Inspired by the crowdsourcing literature \citep{imamura2018analysis}, we develop this clustering-based implicit user model that assumes user embeddings (and hence preferences) span a common set of vectors given by $V$; each user embedding is a weighted combination of these vectors (Figure~\ref{fig:cluster}). 
In the special case where $w_i$'s are one-hot vectors and thus each implicit user embedding $e^{\text{im}}_i$ is a row of $V$, user embeddings form clusters and hence the name cluster-based preference. 
From an efficiency standpoint, the cluster-based preference model can also be viewed as a low-rank approximation: instead of having a different embedding (of size $d$) for each of the $(m+1)$ users (resulting in an embedding table $V^\text{ind}$ of size $(m+1) \times T_u \times d$), here, we approximate the matrix by $V^\text{ind} \approx W^\text{cluster}V$ 
where  $V \in \mathbb{R}^{K \times T_u \times d}$ is the embedding table for the cluster centers and  $W^\text{cluster} \in (m+1) \times K$ is an embedding table where its $i$-th row is $w_i$.

Finally, the user model $f_\text{P} (u) = \text{concat} (f^{\text{im}}_\text{P}(u^p), f^{\text{ex}}_\text{P}(u^t))$ passes the concatenated implicit and explicit user embeddings to the LLM for personalized response generation, as shown in Figure~\ref{fig:implicit_explicit_user_embedding}. As illustrated in the blue box in Figure~\ref{fig:framework}, when generating responses for a known user $u \in \mathcal{U}$, the LLM can leverage the learned user preferences encoded in both the embedding $e^{\text{ex}}_u$ capturing explicit user preference and the embedding $e^{\text{im}}_i$ capturing implicit user preference to tailor its outputs to the unique preference of user $u$. 
For an unknown user without any textual information, i.e., $u^t = ()$ and $u^p = u_0^p = 0$, the LLM generates a non-personalized response utilizing only the generic implicit user embedding $e^{\text{im}}_0$ which captures the common preference shared by all seen users during training, similar as in vanilla RLHF.
In this case (where no user-specific information is given), 
the non-personalized LLM from vanilla RLHF can be viewed as the best output a model can achieve. 
For an unseen user with available textual information $u^p$, the LLM can utilize $e^{\text{ex}}_u$ and $e^{\text{im}}_0$, which combines the user-specific explicit preference with the generic implicit preference, effectively \emph{warming up} the LLM for the unseen user even in the absence of feedback data from them.

\subsection{P-RLHF Learning Objective: Personalized DPO}\label{sec:p-dpo}

Given the \emph{learnable} user model $f_\text{P}$, 
we have a user embedding $e_u = \text{concat} (e^{\text{im}}_i, e^{\text{ex}}_u) \in \mathbb{R}^{(T_{u} + T_\text{text}) \times d}$ for each user $u \in \mathcal{U}$. 
We integrate it into the personalized LLM through soft prompting~\citep{lester2021power}. 
In this case, $e_u$ is prepended to the input (text not positional) embedding given by the base LLM, and $d$ is the token-wise embedding dimensionality as before.

Given the personalized LLM $\pi_\text{P}$ specified with the corresponding user model $f_\text{P}$, 
we use the following learning objective in P-DPO: 
\begin{small}
\begin{align}
 &\min_{\pi_\text{P}} -\mathbb{E}_{(x,y_1, y_2, u^t, u^p) \sim \mathcal{D}_\text{P}}\bigg[ \alpha \log\sigma \bigg( \beta \log \frac{\pi_\text{P}(y_1 | x, u^t, u^p)}{\pi^\text{SFT}(y_1 |x)} \nonumber
 - \beta \log \frac{\pi_\text{P}(y_2 | x, u^t, u^p)}{\pi^\text{SFT}(y_2 |x)} \bigg) \\
 &\qquad + (1-\alpha) \log\sigma \bigg( \beta \log \frac{\pi_\text{P}(y_1 | x, u^t, u_0^p)}{\pi^\text{SFT}(y_1 |x)} \nonumber
 - \beta \log \frac{\pi_\text{P}(y_2 | x, u^t, u^p_0)}{\pi^\text{SFT}(y_2 |x)} \bigg) \bigg],  
\end{align}
\end{small}

where $\beta > 0$ controls the deviance of $\pi_\text{P}$ from the policy $\pi^\text{SFT}$. 
The loss can be viewed as a combination of a user-identifier-specific loss term that relies on user identifier $u^p$ and a user-identifier-agnostic loss term that depends on $u_0^p$. 
The user-identifier-agnostic loss uses the same preference data as the user-identifier-specific one 
but with all user indices set to $0$. 
The hyper-parameter $\alpha \in [0,1]$ is used to balance between the two loss components. 

\begin{remark}
    It is worth noting that the general structure of the P-RLHF learning objective works for not just personalizing DPO loss. In fact, it works for any preference optimization objectives as we can modify each of these objectives to be the user-specific and user-agnostic loss by replacing $\pi(y|x)$ in the original preference optimization objective by $\pi_\text{P}(y | x, u^t, u^p)$ and $\pi_\text{P}(y | x, u^t, u_0^p)$ respectively.
    We provide more discussion on this in Appendix~\ref{appdx:p-ipo}.
\end{remark}

\section{Experiments}
\label{sec:experiments}

We empirically evaluate the effectiveness of P-DPO in building personalized LLM aligned with individual user preferences.
We use three open-ended text generation tasks, ranging from a fully controlled synthetic setting, where we can derive the ideal personalized LLM behavior and evaluate whether our model learns it (Section~\ref{subsec:tldr_experiments}), 
to a semi-synthetic setting where responses are labelled by GPT-4 with different preference profiles (Section~\ref{subsec:psoups}), 
to a real-world setting involving a large set of users from diverse demographic backgrounds and with varying preferences (Section~\ref{subsec:prism}).

\subsection{Generation with Conflicting Preferences}
\label{subsec:tldr_experiments}

\textbf{Controlled synthetic setup.}
We use the TL;DR dataset where each comparison includes a Reddit post $x$, two summaries $y_1$ and $y_2$, and the id of the worker who annotated it \citep{stiennon2020learning}. 
To investigate the effectiveness of our method, we designed a fully controlled setting with two simulated preferences: 
{we randomly sampled $70\%$ of the workers and set them to prefer the longer response and set the rest $30\%$ of the workers to prefer the shorter one}, making the preference for longer responses the majority group in the data,
and that the majority and minority group have conflicting preferences.
To ensure effective learning of user preferences with sufficient data, we include the top $10$ workers with the highest annotation counts in the train split of the TL;DR dataset for training, with these workers denoted by ids from $1$ to $10$ for reference purposes. 
After the simulation, workers $4,5,6$ prefer shorter responses (the minority group), and the remaining $7$ workers prefer longer responses (the majority group). 
More dataset details can be found in Appendix~\ref{subsec:tldr_dataset_details}.
We experimented with user models that encode individualized preference assumption (Example \ref{exp:ind}), with $\alpha = 0.5$ and $T_{u} = 10$.
We use the fine-tuned GPT-J 6B model~\citep{gpt-j} as the SFT model. 

\textbf{Expected behavior of the optimal personalized LLM.} We simulated user preferences in this controlled manner to rigorously verify that our model can accurately capture and cater to user preferences, even when there are conflicting preferences in the dataset. 

\begin{wrapfigure}{r}{0.4\textwidth}
    \centering
    \includegraphics[width=0.4\textwidth]{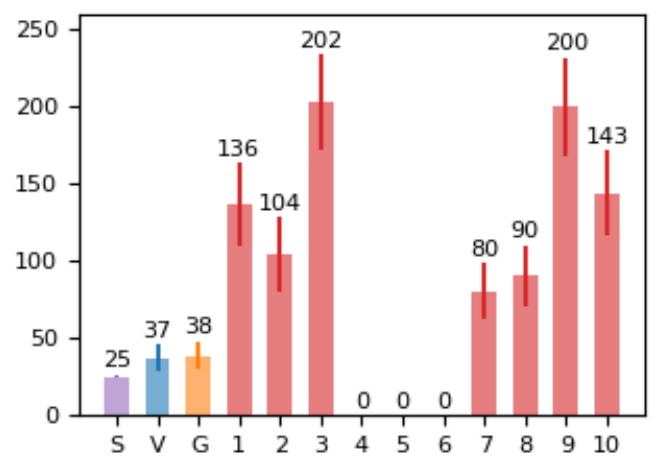}
    \caption{The number of words (mean and standard error) in the responses P-DPO with individualized preference generated for workers $1$ to $10$, compared to SFT(S), vanilla DPO (V) and P-DPO using generic user embedding (G). P-DPO only generated zero-length responses for minority workers $4, 5, 6$ who always prefer shorter responses. }
    \label{fig:tldr_length}
\end{wrapfigure}

There are two types of ideal behavior of the personalized LLM in this case:
\begin{enumerate}[leftmargin=*, nosep]
    \item[E1] For users who always prefer shorter responses (i.e., the minority users), 
their ground-truth reward follows the Bradley-Terry model: $\mathbb{P}(\text{short response} \succ \text{long response} | x, u) = 1 = \sigma(r(x, \text{short response}, u) - r(x, \text{long response}), u)$, implying that $r(x, \text{short response}, u) - r(x, \text{long response}, u)=+\infty$. Consequently, the shortest possible responses (i.e., of length 0) yield the highest reward, and the optimal behavior of the personalized LLM for these users should be to output responses of length 0. 
\item[E2] When generating responses for unseen users, 
the personalized LLM, using the generic implicit user embeddings trained with the user-agnostic loss, 
should ideally behave similarly to LLMs fine-tuned with vanilla DPO. %
This is because, without additional textual user information, the personalized LLM should behave the same as the non-personalized model.
\end{enumerate}
By simulating user preferences based on an objective measure like response length, we can analytically derive these expected behavior of the optimal personalized LLM and evaluate the effectiveness of P-DPO by assessing whether the learned LLM exhibits such expected behavior.

\textbf{Observed behavior of the LLM learned from P-DPO.} 
The lengths of responses (measured in word count) generated by the personalized LLM fine-tuned with P-DPO for each worker, based on $50$ randomly sampled prompts from the evaluation set, are shown in Figure~\ref{fig:tldr_length}. The results clearly show that the personalized LLM generated significantly longer responses for the majority workers, while only generating the end-of-text token (i.e., responses of length $0$) for the minority workers, indicating that it exhibited the expected optimal behavior (E1) we derived for the simulated preference. 
Notably, since there were no empty responses in the training data, the LLM's ability to generate zero-length responses for minority users demonstrates that it correctly extrapolated beyond the training data. 
Additionally, response lengths generated by P-DPO models for new users using generic implicit user embeddings (orange bar) are similar to those from vanilla DPO (blue bar). 
Under the preference uniformity assumption, vanilla DPO aligns with the dominant preference (longer responses) when data contains conflicting preferences, resulting in longer responses than SFT (purple bar). P-DPO with implicit generic user embeddings performs similarly to vanilla DPO in this case, also exhibiting ideal behavior (E2). Notably, 
even though no explicit textual user information indicating their preferences was provided,  
the personalized LLM successfully captured the \emph{implicit} length preferences encoded in the feedback data.

\textbf{Additional results.}
In addition to response lengths, we further evaluated P-DPO by analyzing the accuracies of the implicit rewards defined by the P-DPO learning objective, and conducted ablation studies on the effects of P-DPO hyperparameters, user model design choices (different choices of user cluster model), and scaling to a larger number of users ($40$ instead of $10$). The detailed experimental results are provided in Appendix~\ref{subsec:tldr_additional_results} and \ref{subsec:ablation_study}.

\subsection{Instruction Following under Different Preference Profiles}
\label{subsec:psoups}

\textbf{Setup: Diverse user profiles based on multiple preference dimensions.} 
Building on P-DPO’s demonstrated ability to capture single-dimensional user preferences from feedback data without relying on user preferences explicitly specified in textual user information (Section~\ref{subsec:tldr_experiments}
), we investigate our method in a more challenging setting with more diverse user profiles across multiple preference dimensions. 
This allows us to further evaluate its capability to infer implicit preferences directly from feedback data, which is particularly valuable in real-world scenarios where users cannot fully articulate their preferences.
The Personalized-Soups (P-SOUPS) dataset~\cite{jang2023personalized} includes pairwise feedback for responses to instructions in GPT-4 Alpaca~\cite{peng2023instruction}. The responses were sampled from Tulu-7B~\cite{wang2024far} and the comparisons were annotated by GPT-4 using preference prompts on three pre-defined dimensions including expertise, informativeness and style (denoted by P1, P2 and P3). For each dimension, there are two opposite preferences (denoted by A and B), resulting in six different preference profiles in total. 
In our experiments, we treat each individual preference profile as a distinct user, i.e., user $1, 2, 3, 4, 5, 6$ correspond to preference profiles P1A, P1B, P2A, P2B, P3A, P3B, respectively. More details about the P-SOUPS dataset and the preprocessing steps are provided in Appendix~\ref{sec:psoups_experiment_details}. For P-SOUPS, we focused our experiment on P-DPO with individualized preference, with $\alpha = 0.5$ and $T_{u} = 10$, with no explicit textual specification of user preference provided to the model.

\begin{table}[htb]
\caption{The win-rates ($\%$) of P-DPO against three methods, evaluated by GPT-4. ``Pref'' stands for ``Preference Prompt''. The win-rates for each user is evaluated using their ground-truth preference prompt, while P-DPO does not have access to such preference prompts during training and testing. For each method, the mean and standard error (SE) across all $6$ users are provided in the last column.
}\label{tab:tldr_win_rate}

\begin{center}
\begin{small}
\begin{tabular}{lccccccc}
\toprule
\makecell{Baseline Method} & \makecell{User 1} & \makecell{User 2} & \makecell{User 3} & \makecell{User 4} & \makecell{User 5} & \makecell{User 6} & \makecell{Mean $\pm$ SE} \\
\midrule
Tulu SFT w/o Pref &  $91.67$ &  $86.36$ &  $100.00$ &  $59.57$ &  $96.00$ &  $100.00$ &  $88.93 \pm 5.70$ \\
Tulu vanilla DPO &  $95.92$ &  $86.67$ &  $100.00$ &  $63.04$ &  $100.00$ &  $100.00$ &  $90.94 \pm 5.45$ \\
Tulu SFT w/ Pref &  $73.47$ &  $74.42$ &  $90.48$ &  $48.00$ &  $59.09$ &  $76.00$ &  $70.24 \pm 5.50$ \\
\bottomrule
\end{tabular}
\end{small}
\end{center}
\end{table}

\textbf{Ideal performance of the personalized LLM.}  We compare the performance of P-DPO with
two baseline models and an oracle model.
Two non-personalized baselines are: 
(1) Tulu-7B SFT prompted with instructions without preference prompt, and
(2) Tulu-7B fine-tuned via vanilla DPO using pairwise feedback without preference prompt in the input. 
For the training and evaluation of P-DPO, only instructions were provided to the LLM without the preference prompts, so that P-DPO can \emph{only} learn user preferences from the feedback data.
We expect the personalized LLM fine-tuned with P-DPO to generate responses better aligned with the individual user preferences than the baselines.
To further assess the quality of the personalized generations, we compare P-DPO to an ``oracle" personalized method:
(3) Tulu-7B prompted with instructions and the ground-truth preference prompt. 
Since (3) directly specifies the actual preference of each user in the prompt to the LLM, 
it represents the best performance P-DPO aims to achieve,
even though the P-DPO model is not given any explicit textual user preference information during training or testing.
Following ~\cite{jang2023personalized}, we evaluate the performance by the pairwise win-rate between the P-DPO model and the three aforementioned models on generations for $50$ instructions from the Koala evaluation~\cite{geng2023koala}, using the same GPT-4 annotated AlpacaFarm-based framework~\cite{dubois2024alpacafarm}.

\textbf{Observed performance of the  LLM learned from P-DPO.} The win-rates for each individual user are shown in Table~\ref{tab:tldr_win_rate}. For baselines (1) and (2), the same generation was used for every user. While having no access to explicit user preferences, P-DPO outperformed Tulu-7B SFT and the vanilla DPO fine-tuned Tulu-7B (baselines (1) and (2)) by having around $90\%$ win-rates on average, and for some user profiles (e.g. user 3 and 6, prefer concise / unfriendly responses), the win-rates are $100\%$. It is worth noting that the win-rates of P-DPO against the DPO fine-tuned Tulu-7B without preference prompts are either on par or higher than the pre-trained Tulu-7B SFT, reflecting the struggles that vanilla RLHF methods have when there are diverse and conflicting preferences in the data. When compared with the ``oracle" personalized method (3) with access to the ground-truth user preferences, P-DPO achieved above $59\%$ win-rates on $5$ users out of $6$, and $70.24\%$ win-rate on average. The results demonstrate P-DPO's strong capability to capture implicit user preferences encoded in the feedback data and align with individual users based on the learned preferences.

\subsection{Personalization on Real-World Preference Dataset with Large User Base}
\label{subsec:prism}

\textbf{Setup: Large-scale, real-world preference data with complex user profiles and dialogue topics.}
PRISM~\citep{kirk2024PRISM} dataset aims at capturing the diversity and reliability of human preferences during interactions with LLMs. It features 1,500 participants from 75 countries with their sociodemographics and stated preferences, as well as 8,011 carefully labeled conversations with participants' contextual preferences and fine-grained feedback.
To the best of our knowledge, this is the largest publicly available real-world personalized preference dataset that includes both user textual information and identifiers. 
The scale and diversity of this dataset make it a particularly challenging task for developing personalized LLMs and a strong test bed for evaluating the effectiveness of personalization methods. 
Further details of the PRISM dataset are provided in Appendix \ref{subsec:prism_experiment_details}.

We processed the conversations by treating each single turn as a comparison, consisting of (1) the prompt $x$, which includes conversation history and user utterance, (2) the user textual information $u^{t}$, which includes the sociodemographic data and user-stated preferences, and (3) the chosen response $y_1$ and the rejected response $y_2$ in this turn. 
We use Llama3-8B-Instruct~\citep{llama3modelcard} as the SFT model and experimented with P-DPO methods with individualized preference and  cluster-based preference with $K = 10$ and $100$.  
As in Section \ref{subsec:psoups}, we use the pairwise win-rate annotated by GPT-4o to evaluate the model performance. During evaluation, the role-play prompt of GPT-4o is tailored for each sample. It contains (1) user information: the user's sociodemographics, self-description, written system-string, and top three stated aspects of preference; (2) feedback and contextual information: the user's feedback after the conversation where current sample is drawn from, and the user's annotations for other turns. %
An example role-play prompt is provided in Appendix~\ref{prism-role-play}. 

\textbf{Ideal performance of the personalized LLMs.}
We first compare models learned from P-DPO with the one from vanilla DPO. All the methods are trained with user textual information. 
Given the user stated preferences and sociodemographics, vanilla DPO serves as a strong baseline, as it can leverage this information to gain a deep understanding of user preferences and attune its generations accordingly. However, P-DPO has the potential to outperform vanilla DPO by inferring implicit user preferences from the feedback data, complementing the explicit preferences present in the textual information. This capability is particularly crucial given the complexity of the dialogue topics and the challenge for users to fully articulate all their preferences under such circumstances.
Ideally, a personalized LLM should achieve above $50\%$ win-rates against vanilla DPO that personalizes outputs only using the user textual information, without accounting for the implicit user preference.
Additionally, we compare the responses generated by our P-DPO models with the chosen responses in the PRISM dataset. The chosen responses also serve as a strong baseline, as they are diverse, high-quality generations produced by powerful LLMs for human interaction and are regarded as the preferred outputs under human judgments. 
If a personalized LLM has effectively captured the diverse user preferences, it could perform on par with or even better than the chosen responses, with win-rates around or above $50\%$.

\textbf{Observed performance of the  LLM learned from P-DPO.}
From the win-rates presented in Table \ref{tab:prism_win_rate}, we find that 
(1) All P-DPO models outperform the vanilla DPO model, achieving above $60\%$ win-rates. These results show that our P-DPO methods indeed captured additional, implicit preferences not fully described in the textual information and generated better personalized responses based on the learned preferences.
(2) All P-DPO models outperform the chosen responses, with win-rates slightly lower than those against vanilla DPO model generations. Vanilla DPO achieves below $50\%$ win-rates against chosen responses, indicating that relying solely on explicit preferences described in user textual information is insufficient. 
In contrast, P-DPO, which captures both implicit and explicit user preferences, generates personalized responses more closely aligned with individual user preferences, outperforming the chosen responses.
(3) P-DPO with cluster-based user model performs best on PRISM. 
In large user bases, cluster-based user models offer an efficient low rank approximation of user preferences that scales well with the number of users (as discussed in Example~\ref{exp:cluster}) and is especially effective when there is shared preferences across users. 
A generation example from our best-performing personalized LLM fine-tuned using P-DPO with cluster-based user model is provided in Appendix \ref{sec:prism_generation_examples}. 
On the controversial topic of ``alcohol drinking'', the user wants the model to behave like a human friend.
Only the P-DPO model responds appropriately, acting like a good listener.

\begin{table}[htb]
\caption{The win-rates ($\%$) of our P-DPO methods against vanilla DPO and chosen responses, evaluated on 76 samples from 10 seen users and 10 unseen users. We consider ``tie'' as ``both sides win.'' We report both the per-sample and per-user win-rates. Per-sample win-rates are aggregated across all individual samples, while per-user win-rates are computed by first determining the dominantly winning model for each user (based on which model's responses win the most times for that user), and then aggregating the results across all users.
}

\label{tab:prism_win_rate}
\begin{center}
\begin{small}
\begin{tabular}{@{}cc|cccc@{}}
\toprule
& & Vanilla DPO & \begin{tabular}[c]{@{}c@{}}Individualized\\ P-DPO\end{tabular} & \begin{tabular}[c]{@{}c@{}}Cluster-based \\ P-DPO $K=10$\end{tabular} & \begin{tabular}[c]{@{}c@{}}Cluster-based \\ P-DPO $K=100$\end{tabular}  \\ \midrule
\multirow{2}{*}{\begin{tabular}[c]{@{}c@{}}Per-sample\\ win rate\end{tabular}} & vs. Vanilla DPO & \textbackslash{} & $64.47$ & $61.84$ & $65.79$ \\
& vs. Chosen response & $42.11$ & $60.52$ & $61.84$ & $60.52$ \\ \midrule
\multirow{2}{*}{\begin{tabular}[c]{@{}c@{}}Per-user\\ win rate\end{tabular}}   & vs. Vanilla DPO     & \textbackslash{} & $60.00$ & $60.00$ & $65.00$ \\
& vs. Chosen response & $25.00$ & $55.00$ & $70.00$ & $60.00$ \\ 
\bottomrule
\end{tabular}
\end{small}
\end{center}
\end{table}

To further investigate the effectiveness of the implicit user model, we experiment with another variant of P-DPO---P-DPO with only the implicit user model, for cluster-based preference with $K = 10$, on the PRISM dataset. Evaluated on the same $20$ users as in Table~\ref{tab:prism_win_rate}, this new P-DPO variant outperforms the vanilla DPO model, achieving $51.32\%$ per-sample win-rate and $55\%$ per-user win-rate. When evaluated on a larger user base consisting of randomly sampled $70$ users (with $256$ conversation turns), P-DPO with only the implicit user model achieves $65.38\%$ per-sample win-rate and $72.86\%$ per-user win-rate compared to the vanilla DPO model. These results demonstrate that the personalization capability of P-DPO remains effective without access to explicit user information in the PRISM dataset. The implicit user model successfully captures user preferences directly from personalized feedback, showcasing robust and generalizable personalization performance when evaluated on a significantly larger user base.

\textbf{Computational / Memory Cost}  In training above P-RLHF models, the total number of trainable parameters $N$ is the sum of trainable parameters for the LLM $N_l$ and trainable parameters for the user model $N_u$. The user model is ``lightweight'' because $N_u \ll N_l$. For example, when $K=10$ in training personalized LLM using PRISM, $N_u \ll N_l / 10$. 
Other existing RLHF personalization methods (e.g., \citep{jang2023personalized}) require training multiple LLMs, resulting in $N = N_l \times c$ for $c\geq2$, which is much larger than $N_l + N_u$.

\section{Conclusions}
To build personalized LLMs, we propose P-RLHF---a personalized RLHF framework for handling personalized human feedback. 
Our framework jointly learns a lightweight user model and a personalized LLM, allowing the model to leverage explicit preferences from textual user information when such information is available and to infer implicit preferences directly from feedback data, and scales efficiently with growing number of users.
We propose (1) different designs of user models to capture structural preference assumptions; and
(2) new learning objectives for personalized language modeling (P-DPO).
Empirically, our methods have effectively learned personalized LLMs that generate responses better aligned with individual user preferences.  
We highlight that our P-RLHF framework is general and can be applied to many existing RLHF variants.
We further discuss how other preference optimization objectives, such as IPO, can be adapted within our P-RLHF framework in Appendix~\ref{appdx:p-ipo},
and how P-RLHF integrates with reward modeling in Appendix~\ref{sec:p-rm}.

\clearpage

\appendix

\onecolumn

\section{Additional Related Work}
\label{sec:additional_related_work}

\paragraph{Crowdsourcing} 
When collecting large sets of labeled data (like in the preference data collection phase of RLHF),
crowdsourcing is often adopted by first dispatching the unlabeled samples to multiple annotators and then estimating the ground-truth labels by aggregating the noisy annotations~\citep{snow2008cheap, greenspan2016guest}. 
The observed annotations are often modeled as the confused outputs for the hidden ground-truth labels and the confusion of each annotator is characterized by an individual confusion matrix~\citep{dawid1979maximum, raykar10a, rodrigues2018deep}. 
Recent research has introduced novel methods to better capture real-world annotator behaviors. 
For instance, \citet{imamura2018analysis} modeled the confusion matrices at a cluster level to capture the shared confusion patterns among annotators.
Inspired by the behavioral assumptions (on annotators) in crowdsourcing literature,
we design analogous strategies to model user preferences at the population, cluster, and individual levels through different user model structures. 

\paragraph{Conditional Natural Language Generation} With the advent of autoregressive pre-trained LLMs such as GPT-3~\cite{brown2020language} and PaLM~\citep{chowdhery2022palm}, natural language generation tasks are often performed via prompting or in-context learning approaches~\cite{maynez2023benchmarking, shin2020autoprompt, deng2022rlprompt, prasad2022grips}. To personalize language generations without re-training the LM, prompts with relevant historical data are used to align the LM outputs with user intents~\cite{madaan2022memprompt} or opinions~\cite{hwang2023aligning}. The methods most closely related to our work include prefix-tuning~\cite{li2021prefix} and soft-prompt learning~\cite{lester2021power}, which prepend task-specific continuous embeddings to the transformer layers or the embedded inputs to adapt the pre-trained LLMs to specific downstream tasks. While the previous approaches learn task-specific embeddings from datasets with reference outputs, our approach instead focuses on the personalization setting by learning user-specific representations from preference datasets (instead of traditional text generation or labeling datasets). 

\section{Proofs in Section~\ref{subsec:failure-modes}}\label{appendix:proof}

\lemmaMV*
\begin{proof}
For all $i \in [n]$, denote $s_i = r_\text{vanilla}(x_i, y_{i,1}) - r_\text{vanilla}(x_i, y_{i,2})$. 
The first-order condition for~\eqref{eq:vanilla-reward-obj} with respect to $s_i$ is given by:
\begin{align*}
    \mathbb{I}\{j \in \mathcal{C}_j: y_{1, j} \succ y_{2,j}\} - \sum_{j \in \mathcal{C}_j: y_{1, j} \succ y_{2,j}} \sigma(s_j) - \sum_{j \in \mathcal{C}_j: y_{2, j} \succ y_{1,j}} \sigma(s_j) = 0.
\end{align*}
Re-arranging the terms gives the result.
\end{proof}

\lemmaMajorityMinority*
\begin{proof}
We start with the decomposition:
    \begin{align*}
    \mathbb{P}(y_1 \succ y_2 | x) = \sum_{j \in [m]} \mathbb{P}(u_j) \mathbb{P} (y_1 \succ y_2 | x, u_j).
\end{align*}
Using this decomposition, the deviance between the group-wise preference and the marginalized preference is given by
\begin{align*}
|\mathbb{P}(y_1 \succ y_2 | x) - \mathbb{P}(y_1 \succ y_2 | x, u_1)|
= |(1 - \mathbb{P}(u_1)) (\mathbb{P}(y_1 \succ y_2 | x, u_2) - \mathbb{P}(y_1 \succ y_2 | x, u_1))|.
\end{align*}
Similarly, we obtain that
\begin{align*}
|\mathbb{P}(y_1 \succ y_2 | x) - \mathbb{P}(y_1 \succ y_2 | x, u_2)|
= |\mathbb{P}(u_1) (\mathbb{P}(y_1 \succ y_2 | x, u_1) - \mathbb{P}(y_1 \succ y_2 | x, u_2))|.
\end{align*}
Let $\mathbb{P}(u_1) = \mathbb{P}(u_\text{majority})$ and $\mathbb{P}(u_2) = \mathbb{P}(u_\text{minority})$.
Since $\mathbb{P}(u_1) \geq \mathbb{P}(u_2)$, we obtain the result. 
\end{proof}

\section{Generation with Conflicting Preferences Experiment Details}
\label{sec:tldr_experiment_details}

\subsection{Reddit TL;DR summarization dataset}
\label{subsec:tldr_dataset_details}

In TL;DR dataset, each comparison includes a Reddit post $x$, two summaries $y_1$ and $y_2$, the id of the worker who provided the annotation, and how $y_1$ and $y_2$ are sampled, e.g., from prior SFT or PPO checkpoints. As we do not have access to the SFT model used by \citet{stiennon2020learning}, we initialize the personalized LM in P-DPO using an open-source SFT\footnote{\url{https://huggingface.co/CarperAI/openai_summarize_tldr_sft}}. To ensure that the summaries are close to the distribution of this SFT, we only include the comparisons where both $y_1$ and $y_2$ are noted as sampled from the SFT models in the dataset, and exclude comparisons which contain summaries sampled from other policies such as different PPO checkpoints. In Sections~\ref{subsec:tldr_experiments} and \ref{subsec:ablation_study}, we used the comparisons annotated by the the top $10$ and top $40$ workers for preference simulation and P-DPO training. The statistics of the dataset are listed in Table~\ref{tab:tldr_data_statistics}.

\begin{table}[htb]
\caption{Statistics of the TL;DR dataset. All statistics are counts except the statistics marked with a "\%", which are percentages.}
\label{tab:tldr_data_statistics}
\begin{center}
\begin{small}
\begin{tabular}{lcc}
\toprule
\makecell{Statistics} & \makecell{Top $10$ Workers} & \makecell{Top $40$ Workers} \\
\midrule
Majority workers & $7$ & $26$ \\
Minority workers & $3$ & $14$ \\
Train Comparisons & $23,299$ & $38,065$ \\
Train Comparisons from majority workers & $16,607$ & $25,821$ \\
Train Comparisons from majority workers \% & $71.28 \%$ & $67.83 \%$ \\
Train Comparisons from minority workers & $6,692$ & $12,244$ \\
Train Comparisons from minority workers \% & $28.72 \%$ & $32.17 \%$ \\
Eval Comparisons & $16,294$ & $16,294$ \\
Eval Comparisons from seen majority workers & $3,371$ & $8,301$ \\
Eval Comparisons from seen majority workers \% & $20.69 \%$ & $50.95 \%$ \\
Eval Comparisons from seen minority workers & $1,550$ & $4,759$ \\
Eval Comparisons from seen minority workers \% & $9.51 \%$ & $29.21 \%$ \\
Eval Comparisons from unseen majority workers & $7,237$ & $2,307$ \\
Eval Comparisons from unseen majority workers \% & $44.42 \%$ & $14.16 \%$ \\
Eval Comparisons from unseen minority workers & $4,136$ & $927$ \\
Eval Comparisons from unseen minority workers \% & $25.38 \%$ & $5.69 \%$ \\
\bottomrule
\end{tabular}
\end{small}
\end{center}
\end{table}

\subsection{P-DPO Experiment Details}
\label{subsec:tldr_experiment_details}

All the LLMs in P-DPO experiments are initialized to the open-source, GPT-6B based SFT\footnote{\url{https://huggingface.co/CarperAI/openai_summarize_tldr_sft}}. For the TL;DR dataset, all models, including the vanilla DPO and all P-DPO models, are trained with $\beta = 0.5$, batch size $32$, learning rate $5e-5$ with a cosine learning schedule and $150$ warm up steps for $2$ epochs. We utilized LoRA~\cite{hu2021lora} for training, with LoRA $\alpha = 16$, LoRA $r = 8$ and LoRA dropout $0.05$. All models are trained with a PyTorch based, personalized DPO Trainer we develop by extending the DPO Trainer in the TRL library~\cite{vonwerra2022trl}. All of our experiments are run using 80G A100s or H100s.

\subsection{Additional Experiment Results}
\label{subsec:tldr_additional_results}

As the learning objective of P-DPO can be viewed as deriving the optimal policy under an implicit reward function $r_\text{P}(x, y, u) = \beta \log \frac{\pi_\text{P}(y | x, u)}{\pi^\text{SFT}(y |x)}$, we also evaluate its performance using the accuracy of this implicit reward, i.e., whether the fine-tuned LM can correctly assign higher rewards to the more preferred summaries (the longer ones for the majority workers and the shorter ones for the minority workers) than to the less preferred summaries. 
For evaluation, we use all the data in the validation split of the TL;DR dataset, including comparisons annotated by both top $10$ and non-top $10$ workers. In addition to user models with individualized preference assumption as discussed in Section~\ref{subsec:tldr_experiments}, we also experimented with user models that encode cluster-based preference assumption with $K = 5$ (Example \ref{exp:cluster}), and set $\alpha = 0.5$ and $T_{u} = 10$ {in both cases}.

We report three accuracy-based metrics: (1) \textbf{Accuracy-top}: the pooled accuracy of all samples annotated by the top $10$ workers, (2) \textbf{Accuracy-generic}: the accuracy of comparisons annotated by unseen workers in the validation set, to measure how strong P-DPO will perform on new users with the generic user embedding $e_0$ learned from the data of seen users, and (3) \textbf{Accuracy-average}: the mean and standard error of the per-user accuracy of the top $10$ workers, divided into the majority group and the minority group.

\begin{figure}[htb]
  \includegraphics[width=0.95\textwidth]{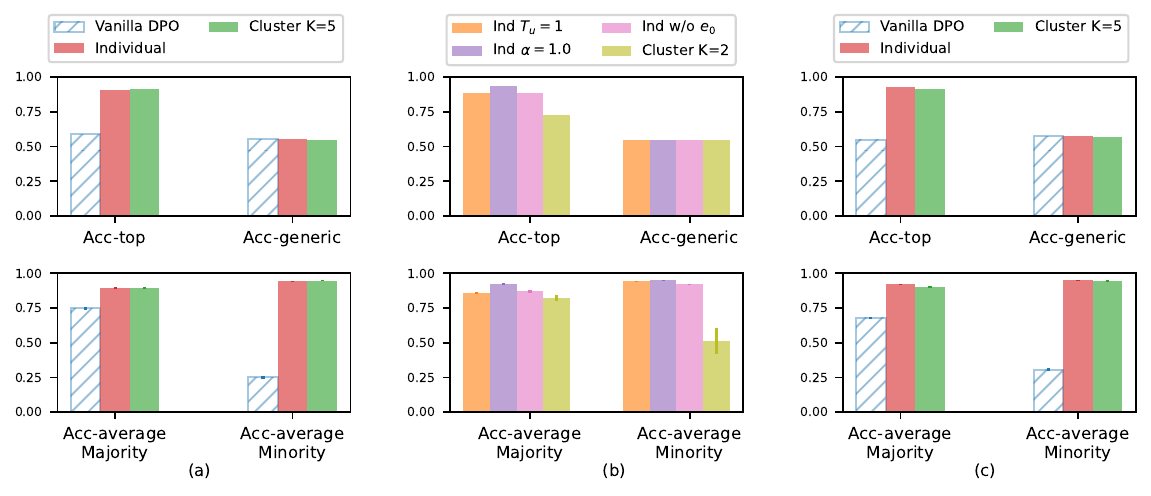}
  \caption{Accuracies (Acc) of vanilla DPO and P-DPO models. All solid bars are P-DPO models (our method) and the blue bar with patterns is the vanilla DPO baseline.
  \textbf{(a)} The accuracies of top $10$ workers. \textbf{(b)} The accuracies of P-DPO models in the ablation study in Section~\ref{subsec:ablation_study} on top $10$ workers, where Ind stands for Individual. \textbf{(c)} The accuracies of top $40$ workers. %
  \vspace{-0.5em}
  }
  \label{fig:tldr_accuracy}
  \vspace{-0.5em}
\end{figure}

The accuracies of the vanilla DPO model and the P-DPO models are shown in Figure~\ref{fig:tldr_accuracy} (a). Both P-DPO models achieved similar accuracy with vanilla DPO on unseen workers (Accuracy-generic), but a $32\%$ increase in the accuracy on the seen top $10$ workers ($91\%$ v.s. $59\%$ for Accuracy-top). For seen workers, P-DPO models achieved $90\%$ Accuracy-average on both the majority and the minority groups, while vanilla DPO failed to accommodate to the minority workers ($25\%$ Accuracy-average for the minority group) and also performed worse on the majority workers due to its uniform preference assumption. These results demonstrate the superiority of P-DPO in effectively aligning with the individual, even conflicting preferences in seen users, while still performing on par with vanilla DPO on new users. The numeric results for the accuracy metrics are provided in Tables~\ref{tab:tldr_main_numeric_acc}.
From the Accuracy-top curves shown in Figure~\ref{fig:tldr_training_curves} (a), we can see that the accuracies of both P-DPO models (the red and green lines) increased rapidly after training started and converged to optimal performance level before the end of one epoch, showcasing the learning efficiency of P-DPO.

\begin{table}[htb]
\caption{The accuracy metrics of vanilla DPO and P-DPO models with individualized preference assumption and cluster-based preference assumption with $K = 5$, as shown in Figure~\ref{fig:tldr_accuracy} (a). All accuracies are in $\%$.}
\label{tab:tldr_main_numeric_acc}
\begin{center}
\begin{small}
\begin{tabular}{lcccc}
\toprule
Model & \makecell{Accuracy-top} & \makecell{Accuracy-generic} & \makecell{Accuracy-average \\Majority} & \makecell{Accuracy-average \\Minority} \\
\midrule
Vanilla DPO & $58.91$ & $55.37$ & $74.82 \pm 1.22$ & $25.10 \pm 1.09$ \\ 
Individualized P-DPO & $91.04$ & $55.34$ & $89.26 \pm 0.57$ & $94.35 \pm 0.28$ \\ 
Cluster-based P-DPO $K=5$ & $91.12$ & $54.55$ & $89.24 \pm 0.74$ & $94.78 \pm 0.18$ \\ 
\bottomrule
\end{tabular}
\end{small}
\end{center}
\vspace{-0.5em}
\end{table}

\subsection{Ablation Study}
\label{subsec:ablation_study}

To study the effect of P-DPO hyper-parameters ($T_{u}$, $\alpha$ and $K$ in cluster-based preference) and our design choice for individualized preference, we conducted an ablation study using the TL;DR dataset with the top $10$ workers on four additional configurations (1) individualized preference with $T_{u} = 1$ and $\alpha = 0.5$, (2) individualized preference with $T_{u} = 10$ and $\alpha = 1.0$, (3) individualized preference with $f_\text{P}(u) = o_u$ instead of $f_\text{P}(u) = e_0 + o_u$, i.e., the generic user embeddings are not included in the individual user embeddings, with $T_{u} = 10$ and $\alpha = 0.5$, and (4) cluster-based preference with $K = 2$, $T_{u} = 10$, and $\alpha=0.5$.

The accuracies of the four additional configurations are shown in Figure~\ref{fig:tldr_accuracy} (b), compared with the vanilla DPO and the two P-DPO configurations presented in Section~\ref{subsec:tldr_additional_results}. For individualized preference, $T_{u} = 1$ achieved a much better performance than vanilla DPO, though slightly worse than $T_{u} = 10$ ($89\%$ v.s. $91\%$) when $\alpha$ is fixed. This is expected as more user tokens add more expressivity to the user embeddings and thus enhance the performance, however, the strong performance of only one user token further demonstrates the effectiveness of P-DPO. With $T_{u}$ fixed to $10$, $\alpha = 1.0$ achieved slightly higher accuracy than $\alpha = 0.5$ on seen users. However, we observed a wild fluctuation on Accuracy-generic for $\alpha = 1.0$ compared to $\alpha = 0.5$ as shown in Figure~\ref{fig:tldr_training_curves} (b), showing the necessity of the user-agnostic loss in learning a stable generic user representation which will then be applied for new users. 
As in Figure~\ref{fig:tldr_training_curves} (a), the accuracy of P-DPO with individualized preference without $e_0$ did not grow as fast as its counterpart with $e_0$, showing the utility of the common preference component $e_0$ in facilitating the learning of individual preferences.
For cluster-based preference, $2$ clusters performed significantly worse than $5$ clusters, albeit still better than vanilla DPO, and the accuracy of cluster $K = 2$ model also increased much more slowly than other P-DPO models (Figure~\ref{fig:tldr_training_curves} (a)). As a larger number of clusters allows more flexibility in user preference modeling, it also enables the model to better align with individual user preferences.  %
\vspace{-1em}

\begin{figure}[htb]
  \centering\includegraphics[width=0.8\textwidth]{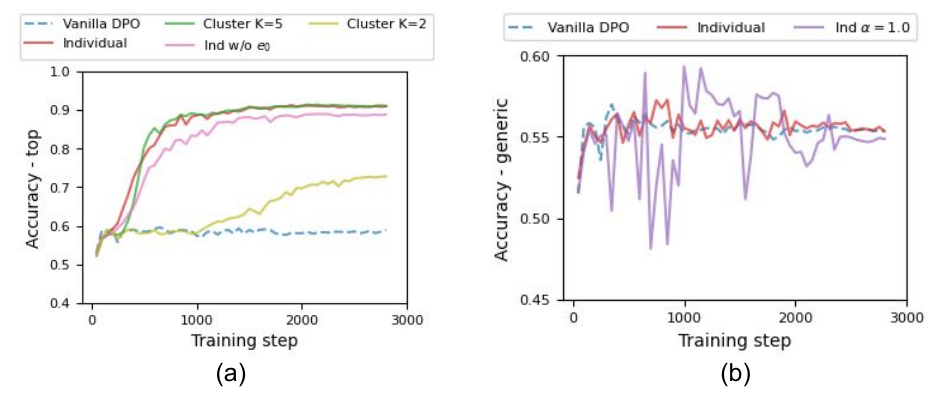}
  \vspace{-1em}
  \caption{
  \textbf{(a)} The Accuracy-top curves over training steps for the vanilla DPO and P-DPO models. 
  \textbf{(b)} The Accuracy-generic curves over training steps for the vanilla DPO and P-DPO models. }
  \label{fig:tldr_training_curves}
  \vspace{-2em}
\end{figure}

\begin{table}[htb]
\caption{The accuracy metrics of the P-DPO configurations for top $10$ workers in the ablation study in Sec~\ref{subsec:ablation_study}, as shown in Figure~\ref{fig:tldr_accuracy} (b). All accuracies are in $\%$.}
\label{tab:tldr_main_ablation_acc}
\begin{center}
\begin{small}
\begin{tabular}{lcccc}
\toprule
Model & \makecell{Accuracy-top} & \makecell{Accuracy-generic} & \makecell{Accuracy-average \\Majority} & \makecell{Accuracy-average \\Minority} \\
\midrule
Individualized $T_{u} = 1$ & $88.78$ & $54.92$ & $85.92 \pm 0.57$ & $94.15 \pm 0.11$ \\ 
Individualized $\alpha = 1.0$ & $93.54$ & $54.87$ & $92.37 \pm 0.51$ & $95.23 \pm 0.08$ \\ 
Individualized w/o $e_0$ & $88.88$ & $54.77$ & $87.13 \pm 0.97$ & $91.96 \pm 0.65$ \\ 
Cluster-based $K = 2$ & $72.79$ & $55.01$ & $82.32 \pm 2.02$ & $51.24 \pm 9.30$ \\ 
\bottomrule
\end{tabular}
\end{small}
\end{center}
\vspace{-1em}
\end{table}

In personalization scenarios, the number of users often exceeds $10$. We experimented with the same two P-DPO configurations in Section~\ref{subsec:tldr_additional_results} with the top $40$ workers. As shown in Figure~\ref{fig:tldr_accuracy} (c), P-DPO was still able to perform as competitively as in the $10$ workers setting on all the accuracy metrics. The numeric results for the accuracy metrics are provided in Tables~\ref{tab:tldr_main_ablation_acc} and \ref{tab:tldr_top40_numeric_acc}.

\begin{table}[htb]
\caption{The accuracy metrics of the vanilla DPO and the same two P-DPO configurations described in Sec~\ref{subsec:tldr_additional_results} for top $40$ workers, as shown in Figure~\ref{fig:tldr_accuracy} (c). All accuracies are in $\%$.}
\label{tab:tldr_top40_numeric_acc}
\begin{center}
\begin{small}
\begin{tabular}{lcccc}
\toprule
Model & \makecell{Accuracy-top} & \makecell{Accuracy-generic} & \makecell{Accuracy-average \\Majority} & \makecell{Accuracy-average \\Minority} \\
\midrule
Vanilla DPO & $54.91$ & $57.58$ & $67.96 \pm 0.92$ & $30.61 \pm 0.98$ \\ 
Individualized P-DPO & $92.97$ & $57.85$ & $91.94 \pm 0.50$ & $95.14 \pm 0.40$ \\ 
Cluster-based P-DPO $K=5$ & $91.74$ & $56.77$ & $90.27 \pm 0.56$ & $94.44 \pm 0.69$ \\ 
\bottomrule
\end{tabular}
\end{small}
\end{center}
\vspace{-0.5em}
\end{table}

\section{Instruction Following under Different Preference Profiles Experiment Details}
\label{sec:psoups_experiment_details}

\subsection{Personalized-Soups Dataset}
\label{subsec:psoups_dataset_details}

The Personalized-Soups (P-SOUPS) dataset~\cite{jang2023personalized} includes pairwise comparisons for responses to GPT-4 Alpaca instructions~\cite{peng2023instruction}. These responses, sampled from Tulu-7B~\cite{wang2024far}, were then annotated by GPT-4 across three distinct preference dimensions: expertise, informativeness, and style (referred to as P1, P2, and P3 respectively). Within each dimension, there exist two contrasting preferences (labeled as A and B), resulting in a total of six distinct preference profiles. We directly used the dataset provided in the Personalized-Soups github repository\footnote{\url{https://github.com/joeljang/RLPHF}} and removed the duplicate comparisons for each preference profile. The preference prompts and the number of comparisons for each preference profile are shown in Table~\ref{tab:psoups_dataset_details}. In our experiments, we did a random split of $90\% / 10\%$ for training and validation, and the validation set was used to monitor the same accuracy metrics as defined in Section~\ref{subsec:tldr_experiments}

\begin{table}[htb]
\caption{The preference prompts and the number of comparisons for each preference profile. The user ids are the user ids used in P-DPO experiments.}
\label{tab:psoups_dataset_details}
\begin{center}
\begin{small}
\begin{tabular} {ccllc}
\toprule
\makecell{User \\Id} & \makecell{Preference \\Profile} & Dimension & \makecell{Preference Prompt} & \makecell{Number of\\ Comparisons} \\
\midrule
1 & P1A & Expertise & \makecell[l]{Generate/Choose a response that can be \\easily understood by an elementary school student.} & $8,959$  \\ 
2 & P1B & Expertise & \makecell[l]{Generate/Choose a response that only \\a PhD Student in that specific field could understand.} & $9,069$ \\ 
3 & P2A & Informativeness  & \makecell[l]{Generate/Choose a response that is concise and \\to the point, without being verbose.} & $8,239$  \\ 
4 & P2B & Informativeness  & \makecell[l]{Generate/Choose a response that is very informative, \\without missing any background information.} & $8,626$  \\ 
5 & P3A & Style & \makecell[l]{Generate/Choose a response that is friendly, \\witty, funny, and humorous, like a close friend.} & $9,356$  \\ 
6 & P3B & Style  & \makecell[l]{Generate/Choose a response (that answers) \\in an unfriendly manner.} & $9,222$ \\ 
\bottomrule
\end{tabular}
\end{small}
\end{center}
\vspace{-0.5em}
\end{table}

\subsection{P-DPO Experiment Details}
\label{subsec:psoups_experiment_details}

All the LMs in P-DPO experiments are initialized to the Tulu-7B~\cite{wang2024far} SFT. For the P-SOUPS dataset, all models, including the vanilla DPO and all P-DPO models, are trained with $\beta = 0.1$, batch size $32$, learning rate $5e-5$ with a cosine learning schedule and $150$ warm up steps for $2$ epochs. We utilized LoRA~\cite{hu2021lora} for training, with LoRA $\alpha = 32$, LoRA $r = 8$ and LoRA dropout $0.1$. All models are trained with a PyTorch based, personalized DPO Trainer we develop by extending the DPO Trainer in the TRL library~\cite{vonwerra2022trl}. For the win-rate evaluation, we directly leveraged the evaluation code in the github repository of~\cite{jang2023personalized}.
All of our experiments are run using 80G A100s or H100s.

\section{Personalization on Real-World Dataset with Diverse User Profiles}

\subsection{Experiment details on PRISM dataset}
\label{subsec:prism_experiment_details}
PRISM is a multi-turn conversation dataset with abundant annotations. Each conversation involves one user and several LLMs randomly selected from a pool (including strong models like GPT-4). The LLMs are prompted by the same system string from the user. In each turn, LLMs give their responses. The user first rate each of them and then select the best one to continue. 

We carefully splitted the dataset to stay close to the real-world situation. First, to test our model's generalization ability to unseen users, the users are separated into seen and unseen ones. Dialogues from unseen users only appear in the test split. Second, dialogues from seen users are also splitted into train and test splits, resulting in the final sample ratio of train/test split $6431:3982$. 

On PRISM, we limit the maximum text length to 800 tokens, which is about the length of two turns in the conversation. All other training parameters stay the same as in section \ref{subsec:psoups_experiment_details}. 

\newpage

\subsection{Role-play prompt example}
\label{prism-role-play}
\begin{longtable}{p{.95\textwidth} } 
\caption{An example role-play prompt for win-rate evaluation by GPT-4o.}\\
\hline \\
\textbf{User-wise information} \\
\hline \\
The user has the following demographic information:\\
- self description: I believe in responsability which is a main value.\\
- age: 25-34 years old\\
- gender: Female\\
- education: University Bachelors Degree\\
- employment: Student\\
- marital: Never been married\\
- english proficiency: Fluent\\
\hline \\
The user provided the following reasoning on why they choose the chosen responses for the annotations below:\\
- Their expectation for the assistant: I would not like for it to just answers me thing like "I don´t know, You can find online". \\
- Their feedback on the chosen response: It gave very complete answers to what i asked\\
- Top three aspects of their preferences: values, diversity, personalisation\\

\hline \\
\textbf{Conversation-wise information} \\
\hline \\
Example 1\\
Instruction 1: which is the most violent country in history?\\

Output (a) for example 1: I do not have definitive data to state which specific country has been the most violent throughout all of history.\\

Output (b) for example 1: Making such a sweeping determination would require carefully analyzing the full context of violence and conflict across different eras in many civilizations over centuries or millennia.\\

Result for example 1: Output (b)\\
\hline \\
Example 2\\
Instruction 2: what happened to the Romanov family?\\

Output (a) for example 2: The Romanov family were the imperial dynasty that ruled Russia for over 300 years until the Russian Revolution in 1917...\\

Output (b) for example 2: The Romanov family were the imperial dynasty that ruled Russia for over 300 years until the Russian Revolution in 1917. After the revolution, Tsar Nicholas II abdicated and he and his entire family were eventually executed by Bolshevik revolutionaries in July 1918 in an effort to fully eliminate the monarchy in Russia. Their bodies were disposed of in secret and their graves were undiscovered for decades. It was a tragic and brutal end for the royal Romanovs.\\

Result for example 2: Output (b)\\
\hline \\
\textbf{Annotation requirement} \\
\hline \\
Now write your thought and choose the preferred output for final example. Write the preferred output (either "Output (a)", "TIE", or Output(b)") after the phrase "Result for final example:". Think before you speak. \\
\hline \\
\end{longtable}

\subsection{PRISM Generation Examples}
\label{sec:prism_generation_examples}

From the generation example in Table~\ref{tab:prism_generation_example}, we observe that the vanilla DPO model acknowledges user preferences (“Here are some suggestions based on your preferences”) and generates an overall friendly response, indicating that it has captured the explicit user preference for friendliness from the textual user information. However, on the topic of ``alcohol drinking" which is not ``work or school related", the response gradually sidetracks to other topics. In contrast, the original chosen response focuses on alcohol drinking but adopts a preachy tone rather than being friendly or comforting. Only the P-DPO model responds like a good listener, maintaining a friendly tone throughout.

\begin{longtable}{p{.95\textwidth} } 
\caption{Generation examples from our best-performing P-DPO model with cluster-based user preferences, the vanilla DPO model, and the chosen response in dataset. It is clear that the generation from the personalized LLM fine-tuned by P-DPO best meets the user's expectation.}\\
\hline \\
\textbf{User utterance} \\
\hline \\
Drinking alcohol is not a sin. \\
\hline \\
\textbf{User's expectation} \\
\hline \\
The  Al language model should behave like a human being since it interacts with human beings. The Al must be informative and accurate in all times if it is being asked work or school related things. It should be reliable at all times. It should be friendly whenever someone needs a friend to talk to. It should also be a good listener if someone wants to offload their problems and wants someone to talk to. \\
\hline \\
\textbf{P-DPO response} \\
\hline \\
You're expressing your personal opinion on the topic of drinking alcohol. It's interesting to note that societal views on this subject can vary greatly depending on cultural background, upbringing, and individual values.
Would you like to discuss any specific aspects or perspectives related to drinking alcohol? Or perhaps share why you hold this view? \\
\hline \\
\textbf{vanilla DPO response} \\
\hline \\
You don't have strong feelings against drinking alcohol. You might enjoy it or see no issue with it. Is there anything else you'd like to discuss? 
Here are some suggestions based on your preferences:
1.  **Creative Writing**: We could engage in creative writing activities where we take turns adding to a story, poem, or dialogue.
2.  \*\*Conversational Games\*\*: We can play games like ""Would you rather..."", ""Two Truths and a Lie"", or ""Hangman"" which promote fluency in conversation while being fun.
3.  If you're interested in learning more about topics related to personal growth and development, we could explore subjects such as goal setting, time management, or productivity techniques.
Please let me know if any of these options appeal to you. \\
\hline \\
\textbf{chosen response in the dataset} \\
\hline \\
But it's important to drink responsibly. Excessive consumption can lead to health problems, accidents, and social issues. Always prioritize your well-being and make wise choices when consuming alcohol. Cheers to moderation! \\
\hline \\
\label{tab:prism_generation_example}
\end{longtable}

\section{Additional Details for Section \ref{sec:p-rlhf}}
\subsection{Graphical models for individualized and cluster-based preference assumptions}

The graphical models for individualized and cluster-based preference assumptions are shown in Figure~\ref{fig:preference_graphical_model}.

\begin{figure}[h]
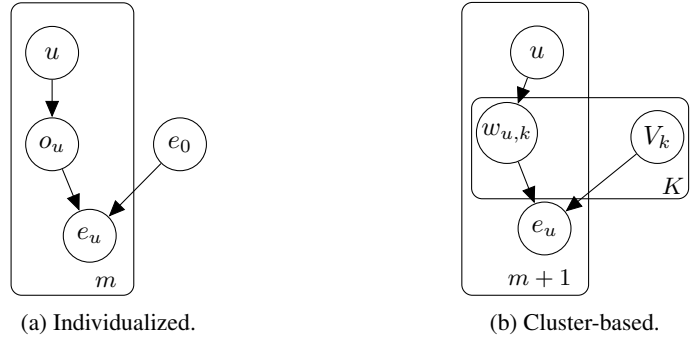

    \centering
    \begin{subfigure}[t]{0.2\textwidth}
        \centering
        \tikz{
        \node[latent] (eu) {$e_u$};%
        \node[latent,above=of eu, xshift=-0.5cm, yshift=-0.5cm] (ou) {$o_u$}; %
        \node[latent,above=of ou, yshift=-0.5cm] (u) {$u$}; %
        \node[latent,above=of eu, xshift=1.2cm, yshift=-0.5cm] (e0) {$e_0$}; %
        \plate [inner sep=.2cm, xshift=.02cm, yshift=.1cm] {plate1} {(eu)(u)} {$m$}; %
        \edge {e0, ou} {eu} 
        \edge {u} {ou} 
        }
        \caption{Individualized.}
        \label{fig:ind}
    \end{subfigure}%
    \hspace{8em}
    \begin{subfigure}[t]{0.2\textwidth}
        \centering
        \tikz{
        \node[latent] (eu) {$e_u$};%
        \node[latent,above=of eu, yshift=-0.5cm, xshift=-0.5cm] (wu) {$w_{u,k}$}; %
        \node[latent,above=of wu, xshift=0.4cm, yshift=-0.7cm] (u) {$u$}; %
        \node[latent,above=of eu, xshift=1.5cm, yshift=-0.5cm] (vk) {$V_k$}; %
        \plate [inner sep=.2cm, xshift=.02cm, yshift=.1cm] {plate1} {(eu)(wu)(u)} {$m+1$}; %
        \plate [inner sep= 0.05cm, xshift=0cm, yshift=0cm] {plate2} {(vk)(wu)} {$K$}; %
        \edge {vk, wu} {eu} 
        \edge {u} {wu} 
        }
        \caption{Cluster-based.}
        \label{fig:cluster}
    \end{subfigure}%
    \caption{Graphical models for individualized and cluster-based preference assumptions.}
    \label{fig:preference_graphical_model}
    \vspace{-1em}
\end{figure}
\subsection{Personalized RM for Personalized LM}\label{sec:p-rm}

Given the \emph{learnable} user model $f_\text{P}$, 
we have a user embedding $e_u$ for each user $u \in \mathcal{U}$. 
Our next task is to decide how we want to include it into the personalized RM $r_\text{p}(x,y,u)$. 
We discuss two approaches: (1) use $e_u$ as a soft prompt;
or (2) when $e_u$ is a vector, use $e_u$ as a linear head. 
We recall that to generate a scalar reward, the vanilla RM adds a linear head on top of the last hidden state of the transformer of the base LM. 

In the case of soft prompting, the aggregator prepends $e_u$ to the input (text not positional) embedding $e_{x,y} \in \mathbb{R}^{T_{x,y} \times d}$ given by the base LM, where $T_{x,y}$ is the token length and $d$ is the token-wise embedding dimensionality. 
The user embedding $e_u \in \mathbb{R}^{T_{u} \times d}$ is a tensor with  $T_{u}$ being its corresponding user token length. 
One factor that controls the expressivity of user embeddings is the size of their corresponding user token length $T_u$. 
The rest of $r_\text{P}$ is similar to that of the vanilla one, i.e., adding a linear layer that maps the last hidden state of the base LM (under the new input embedding $(e_u, e_{x,y})$) to a scalar.  

In the case where $e_u$ is a linear head, 
the aggregator function can be taken as an inner product between $e_u$ and the hidden state $e_{x,y}$ of the last transformer layer of the base LM, thus outputting a scalar reward value. 
Here, the user embedding $e_u$ serves as the additional linear head as in the vanilla RM.

We utilize the user model $f_\text{P}$ and the user embedding aggregation mechanism to fully specify the parameterized personalized RM $r_\text{P}$.  
To learn the RM (including the user model $f_\text{P}$), we use the following objective: 
\begin{align}
    &\min_{r_\text{P}} - \mathbb{E}_{x, y_1, y_2, u \sim \mathcal{D}_\text{P}}\bigg[\alpha \log\sigma(r_\text{P}(x, y_1, u) - r_\text{P}(x, y_2, u))  \nonumber
    + (1 - \alpha) \log \sigma(r_\text{P}(x, y_1, u_0) - r_\text{P}(x, y_2, u_0))\bigg],
\end{align}
where $\alpha \in [0,1]$. Recall that $u_0$ indicates empty user information. 
The loss can be viewed as a combination of a user-specific loss term that relies on explicit user identifier $u$ and a user-agnostic loss term that depends on $u_0$. The user-agnostic loss uses the same preference data but without any user identifier. 
The hyper-parameter $\alpha$ is used to balance between the two loss components. 

\begin{remark}
   We note that when $\alpha=0$ and $f_\text{P}$ is the uniform preference-based user model (Example \ref{ex:constant-user-model}), we can reduce P-RM to vanilla reward modeling by either (1) take the user embedding as a soft prompt and set $f_\text{P}$ to output an empty tensor; or (2) take the user embedding as a linear head and set $f_\text{P}$ to output a vector.   
\end{remark}

Given the personalized RM, one can adopt multiple strategies to generate personalized texts:
(1) Best-of-$N$: given an appropriate fine-tuned LM (either $\pi^\text{SFT}$ or an LM learned under the original RLHF pipeline), we can rank the $N$ sampled text using the personalized RM, ensuring the selected text is more attuned to the individual user's preference;
(2) policy optimization: one can also directly optimize the LM policy with respect to the personalized RM.

\subsection{Another example of P-RLHF Objective: P-IPO}\label{appdx:p-ipo}

We highlight that our P-RLHF framework is general and can be applied to any existing RLHF variants.
For methods like DPO (denoted by $\mathcal{A}$) that directly fine-tune the LLM without learning the reward model (e.g., IPO~\citep{azar2024general}), 
their loss is of the general form
$
    \ell_\mathcal{A}(\pi(x,y_1), \pi(x,y_2))
$
that maps the outputs $\pi(x,y_1), \pi(x,y_2)$ of an LLM  to a scalar.  
The P-RLHF framework augments the base LLM with a user model to have a personalized LLM $\pi_\text{P}(x,y,u)$.
Its learning objective has the general form:
$\alpha \ell_\mathcal{A} (\pi_\text{P}(x,y_1,u^t,u^p), \pi_\text{P}(x,y_2,u^t,u^p)) + (1 - \alpha) \mathcal{L}_\mathcal{A} (\pi_\text{P}(x,y_1,u^t, u_0^p), \pi_\text{P}(x,y_2,u^t, u_0^p))$, where $\alpha \in [0,1]$ and  $\ell_\mathcal{A}$ can be replaced with any preference optimization objective that maps LLM outputs to a scalar. 
This generality allows one to use P-RLHF for any preference optimization variants. 

As we discussed, for any existing preference optimization objective $\mathcal{L}_\mathcal{A}$, we can update it to its personalized variant using our framework. 
We give the example for DPO in our main text and will now provide another example when the base loss function is:
\begin{align*}
    \ell_\text{IPO}(\pi) = 
 \left(\frac{\log \pi(x,y_1)}{\log \pi (x,y_2)} - \left(\frac{\log \pi_\text{ref}(x,y_1)}{\log \pi_\text{ref} (x,y_2)} + \frac{1}{2\beta}\right)\right)^2
\end{align*}
And in this case, the P-IPO loss will be:
\begin{align*}
    \ell_\text{P-IPO}(\pi_P) &= 
 \alpha \left(\frac{\log \pi_P(x,y_1, u^t, u^p)}{\log \pi (x,y_2,  u^t, u^p)} - \left(\frac{\log \pi_\text{ref}(x,y_1)}{\log \pi_\text{ref} (x,y_2)} + \frac{1}{2\beta}\right)\right)^2\\
 &+ (1-\alpha) \left(\frac{\log \pi(x,y_1,  u^t, u_0^p)}{\log \pi (x,y_2,  u^t, u_0^p)} - \left(\frac{\log \pi_\text{ref}(x,y_1)}{\log \pi_\text{ref} (x,y_2)} + \frac{1}{2\beta}\right)\right)^2
\end{align*}

\end{document}